\theoremstyle{plain}
\newtheorem{theorem}{Theorem}
\newtheorem{example}[theorem]{Example}
\newcommand{\bh}{\boldsymbol{h}}
\newcommand{\setb}{{\{0,1\}}}
\newcommand{\vc}[1]{{d({#1})}}
\renewcommand{\restriction}{\mathord{|}}
\newcommand{\pr}[2]{{#1}\restriction_{{#2}}}
\newcommand{\cH}{\mathcal{H}}
\newcommand{\bine}{H} 
\title{The VC-Dimension of Similarity Hypothesis Spaces}
\author{{\bf Mark Herbster,~~~Paul Rubenstein,~~~James Townsend} \vspace{.08in}\\ 
Department of Computer Science \\ University College London \\
Gower Street, London WC1E 6BT, England, UK \\
 \em{\small \{m.herbster,~paul.rubenstein.14,~james.townsend.14\}@ucl.ac.uk} \\
}
\begin{document}
\maketitle
\begin{abstract}
Given a set $X$ and a function $h:X\longrightarrow\{0,1\}$ which labels each element of $X$ with either $0$ or $1$, we may define a function $h^{(s)}$ to measure the \emph{similarity} of pairs of points in $X$ according to $h$. Specifically, for $h\in \{0,1\}^X$ we define $h^{(s)}\in \{0,1\}^{X\times X}$ by $h^{(s)}(w,x):= \mathbb{1}[h(w) = h(x)]$.
This idea can be extended to a set of functions, or \emph{hypothesis space} $\mathcal{H} \subseteq \{0,1\}^X$ by defining a \emph{similarity hypothesis space} $\mathcal{H}^{(s)}:=\{h^{(s)}:h\in\mathcal{H}\}$.  We show that   $\mbox{{\sc vc-dimension}}(\cH^{(s)}) \in \Theta(\mbox{{\sc vc-dimension}}(\cH))$.
\end{abstract} 

\section{Introduction} 
Consider the problem of learning from examples.  We may learn by receiving {\em class} labels as feedback: `this is a {\bf dog}', `that is a {\bf wolf}' , `there is a {\bf cat}', etc.   We may also learn by receiving {\em similarity} labels: `these are the {\bf same}', `those are {\bf different}' and so forth.  In this note we study the problem of learning with similarity versus class labels.  Our approach is to use the {\em VC-dimension}~\cite{vc-ucrfep-71} to study the fundamental difficulty of this learning task.

In the supervised learning model we are given a training set of patterns and associated labels.  
The goal is then to find a hypothesis function that maps patterns to labels that will predict with few errors  on future data ({\em small generalization error}).   
A classic approach to this problem is empirical risk minimisation. Here the procedure is to choose a hypothesis 
from a set of hypothesis functions ({\em hypothesis space}) that `fits' the data as closely as possible.  
If the hypothesis is from a hypothesis space with small VC-dimension 
and fits the data well then we are likely to predict well on future data~\cite{vc-ucrfep-71,behw-lvd-89}.   
The number of examples required to have small generalisation error with high probability is called the {\em sample complexity}.   In the {\em uniform learnability} model the VC-dimension gives a nearly matching upper and 
lower bound on the sample complexity~\cite{behw-lvd-89,ehkv-glbnenl-89}.
In Theorem~\ref{thm:vcdim} we demonstrate that the VC-dimension of a hypothesis space with respect to similarity-labels is proportionally bounded by the VC-dimension with respect to class-labels indicating that the sample complexities within the two feedback  settings are comparable. That is, the fundamental difficulties of the two learning tasks are comparable.
\subsection*{Related work} We are motivated by the results of~\cite{ghp-ospndkug-13}.  Here the authors considered the problem of similarity prediction in the online mistake bound model~\cite{litt88}.  In~\cite[Theorem~1]{ghp-ospndkug-13}  it was found that  given a basic algorithm for class-label prediction with a mistake bound there exists an algorithm for similarity-label prediction with a mistake bound which was larger by no more than a constant factor.  In this work we find an analogous result in  terms of the VC-dimension.  

\section{The VC-dimension of similarity hypothesis spaces}
A hypothesis space $\cH \subseteq \{0,1\}^X$ is a set of functions from some set of patterns $X$ to the set of labels $Y=\{0,1\}$ in the two-class setting.  The \emph{restriction} of a function $h \in\{0,1\}^X$ to a subset $X'\subseteq X$ is the function $\pr{h}{X'} \in \{0,1\}^{X'}$ with $\pr{h}{X'}(x):=h(x)$ for each $x \in X'$. Analogously, one can define the restriction of a hypothesis space as $\pr{\cH}{X'} := \{\pr{h}{X'} : h\in\cH\}$.

A subset $X' \subseteq X$ is said to be \emph{shattered} by $\mathcal{H}$ if $\pr{\cH}{X'}=\{0,1\}^{X'}$, that is if the restriction contains \emph{all possible} functions from $X'$ to $\{0,1\}$. The VC-dimension~\cite{vc-ucrfep-71} of a hypothesis space $\cH \subseteq \{0,1\}^X$, denoted $d(\cH)$, is the size of the largest subset of $X$ which is shattered by $\cH$, that is
\begin{equation*}\label{eq:vcdef}
\vc{\cH} := \max_{X' \subseteq X} \{ |X'| : \pr{\cH}{X'}=\{0,1\}^{X'} \}\,.
\end{equation*}
\emph{Sauer's lemma}~\cite{vc-ucrfep-71,Sauer72,shelah1972}, which gives a lower bound for the VC-dimension of a hypothesis space, will be used for proving our main result. It states that for a hypothesis space $\mathcal{H}\subseteq \{0,1\}^X$, if
\begin{equation}\label{eq:sauer}
|\mathcal{H}|>\sum_{k=0}^{m-1} \binom{|X|}{k}
\end{equation}
then $\vc{\mathcal{H}}\geq m$.

Given a function $h:X\longrightarrow\{0,1\}$, we may define a function $h^{(s)}$ to measure the \emph{similarity} of pairs of points in $X$ according to $h$. Specifically, for $h\in \{0,1\}^X$ we define $h^{(s)}\in \{0,1\}^{X\times X}$ by $h^{(s)}(w,x):= \mathbb{1}[h(w) = h(x)]$, where $\mathbb{1}$ is the indicator function.
This idea can be extended to a hypothesis space $\mathcal{H}$ by defining the \emph{similarity hypothesis space} $\mathcal{H}^{(s)}:=\{h^{(s)}:h\in\mathcal{H}\}$.  
We now give our central result,
\begin{theorem}\label{thm:vcdim}
Given a hypothesis space $\cH \subseteq \setb^X$,
\begin{equation*}
\vc{\cH} -1  \le \vc{\cH^{(s)} } \le \delta \vc{\cH}\,,
\end{equation*}
with $\delta = 4.55$.
\end{theorem}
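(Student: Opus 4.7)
For the lower bound $\vc{\cH}-1 \le \vc{\cH^{(s)}}$, the plan is direct. Let $\{x_1,\ldots,x_d\}\subseteq X$ be shattered by $\cH$, where $d=\vc{\cH}$. I will show that $\cH^{(s)}$ shatters the $d-1$ pairs $(x_1,x_i)$ for $i=2,\ldots,d$: given any target labeling $b_2,\ldots,b_d\in\setb$, shattering lets me pick $h\in\cH$ with $h(x_1)=0$ and $h(x_i)=1-b_i$, so that $h^{(s)}(x_1,x_i)=b_i$ for each $i$.

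For the upper bound, let $S\subseteq X\times X$ be shattered by $\cH^{(s)}$ with $|S|=m := \vc{\cH^{(s)}}$, and let $V\subseteq X$ be the set of points appearing in $S$. First I will discard trivial pairs: $(x,x)$ has $h^{(s)}$-value forced to $1$, while $(x,y)$ and $(y,x)$ always share the same $h^{(s)}$-value, so $S$ may be regarded as the edge set of an undirected simple graph $G_S=(V,S)$, giving $|V|\le 2m$. The key structural observation is that $G_S$ must be a forest. Indeed, for any closed walk $w_1\to\cdots\to w_k\to w_1$ and any $h$, the XOR $\bigoplus_{i=1}^{k}\mathbb{1}[h(w_i)\ne h(w_{i+1})]$ telescopes to $\mathbb{1}[h(w_1)\ne h(w_1)]=0$; hence any cycle of length $k$ admits only $2^{k-1}$ realisable labelings out of $2^k$, which contradicts shattering.

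Next I will relate $|\pr{\cH}{V}|$ to $m$ and $\vc{\cH}$. The well-defined restriction map $\pr{h}{V}\mapsto \pr{h^{(s)}}{S}$ is surjective onto $\pr{\cH^{(s)}}{S}$, which has $2^m$ elements, so $|\pr{\cH}{V}|\ge 2^m$. Sauer's lemma, applied contrapositively to (\ref{eq:sauer}), together with the standard binary-entropy bound $\sum_{k=0}^{d}\binom{n}{k}\le 2^{n\bine(d/n)}$ (valid for $d\le n/2$), yields $|\pr{\cH}{V}|\le 2^{2m\,\bine(\vc{\cH}/(2m))}$. Comparing the two bounds gives $\bine(\vc{\cH}/(2m))\ge 1/2$. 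Letting $p^\ast\approx 0.1100$ denote the smaller root of $\bine(p)=1/2$, this forces $\vc{\cH}/(2m)\ge p^\ast$, and therefore $m\le \vc{\cH}/(2p^\ast) < 4.55\,\vc{\cH}$. (The case $\vc{\cH}>m$, in which the entropy bound does not apply, is trivial since then $m<\vc{\cH}<4.55\,\vc{\cH}$.)

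The main obstacle is the forest argument: it is the combinatorial step that extracts the bound $|V|\le 2m$, which is precisely what couples the shattering lower bound $2^m$ with the Sauer upper bound on $|\pr{\cH}{V}|$. The remaining calculation is purely numerical and is tuned to extract the constant $4.55\approx 1/(2p^\ast)$.
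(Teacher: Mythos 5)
Your proof is correct and follows essentially the same route as the paper: a direct shattering construction for the lower bound (you use a star of pairs $(x_1,x_i)$ where the paper uses a chain $(x_i,x_{i+1})$; both work), and for the upper bound the key inequality $|\pr{\cH}{V}|\ge 2^m$ with $|V|\le 2m$, combined with Sauer's lemma and the binary-entropy bound to force $\bine(\vc{\cH}/(2m))\ge 1/2$ and hence $m\le \vc{\cH}/(2p^\ast)<4.55\,\vc{\cH}$; the paper phrases this by choosing $m=1+\lfloor 2\epsilon M\rfloor$ with $\epsilon=0.11$, which is the same computation. One correction to your own commentary: the forest argument, while true, is not what yields $|V|\le 2m$ --- that bound is immediate from the fact that $m$ pairs involve at most $2m$ points --- and it is not used anywhere in your chain of inequalities, so it can be deleted without loss; if anything the forest property gives a \emph{lower} bound $|V|\ge m+1$. (The paper does use exactly this forest observation, but in Example~\ref{ex:sparsek} to pin down $\vc{\cH_k^{(s)}}=2k$ for $k$-sparse vectors, not in the proof of Theorem~\ref{thm:vcdim}.)
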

\begin{proof}
For the left hand inequality, let $n := d(\mathcal{H})$ and pick a set $T = \{x_1, x_2, \ldots , x_n\}$ of size $n$ which is shattered by $\mathcal{H}$. Then let $T' = \{(x_1, x_2), (x_2, x_3), \ldots , (x_{n-1}, x_{n})\}$. To demonstrate that $T'$ is shattered by $\mathcal{H}^{(s)}$, let $g \in \{0,1\}^{T'}$ be any mapping from $T'$ to $\{0,1\}$. Then since $T$ is shattered by $\mathcal{H}$ we may find a map $h \in \mathcal{H}$ with $h(x_1) = 0$ and
\begin{equation*}
h(x_{i+1}) = \left\{
	\begin{array}{l l}
		h(x_i) & \quad \text{if } g(x_i, x_{i+1}) = 1\\
		1 - h(x_i) & \quad \text{if } g(x_i, x_{i+1}) = 0
	\end{array}
\right.
\end{equation*}
for $i = 1, \ldots, n-1$. Observe that $g = \pr{h^{(s)}}{T'}$. Since $g$ was chosen arbitrarily, we may conclude that $T'$ is indeed shattered by $\mathcal{H}^{(s)}$, and therefore $d(\mathcal{H}^{(s)}) \geq |T'| = d(\mathcal{H}) - 1$.

For the right hand inequality, first let $M := \vc{\mathcal{H}^{(s)}}$ and then pick a set $U = \{(w_1, x_1), (w_2, x_2), \ldots , (w_M, x_M)\}$ of size $M$ in $X\times X$ which is shattered by $\mathcal{H}^{(s)}$. Let $V = \{w_1, w_2, \ldots, w_M, x_1, x_2, \ldots, x_M\}$ and note that $|\pr{\mathcal{H}}{V}| \geq |\pr{\mathcal{H}^{(s)}}{U}| = 2^M$. This is because any two maps $h$ and $g$ which agree on $V$ will induce maps $h^{(s)}$ and $g^{(s)}$ which agree on $U$, so $\pr{\mathcal{H}^{(s)}}{U}$ cannot possibly contain more maps than $\pr{\mathcal{H}}{V}$.
Using this fact, and applying Sauer's Lemma (see~\eqref{eq:sauer}) to $\pr{\mathcal{H}}{V}$, we see that if
\begin{equation*}
2^M > \sum_{k = 0}^{m-1}\binom{|V|}{k}
\end{equation*}
then $\vc{\mathcal{H}} \geq \vc{\pr{\mathcal{H}}{V}} \geq m$.

Now note the following inequality (see e.g.,~\cite[Lemma~16.19]{FlumGrohe06}), which bounds a sum of binomial coefficients:
\begin{equation}\label{eq:entineq}
\sum_{i=0}^{\lfloor{\epsilon n}\rfloor} \binom{n}{i} \leq 2^{\bine(\epsilon)n} \quad (0 < \epsilon < 1/2)\,,
\end{equation}
where $\bine(\epsilon) := \epsilon \log_2 \frac{1}{\epsilon} + (1-\epsilon) \log_2 \frac{1}{1-\epsilon}$ denotes the \emph{binary entropy} function. If we set
$m = 1 + \lfloor2\epsilon M\rfloor$ for some $\epsilon<\frac{1}{2}$ such that $\bine(\epsilon)<\frac{1}{2}$, we have
\begin{equation*}
\sum_{k=0}^{m-1} \binom{|V|}{k} = \sum_{k=0}^{\lfloor2\epsilon M\rfloor} \binom{|V|}{k} \leq \sum_{k=0}^{\lfloor2\epsilon M\rfloor} \binom{2M}{k} \leq 2^{2M\bine(\epsilon)} < 2^M
\end{equation*}
using~\eqref{eq:entineq} and that $|V| \leq 2M$ from the definition of $V$.
Thus Sauer's lemma can be applied with the above value of $m$ and hence
\[ \vc{\cH} \geq 1 + \lfloor2\epsilon M\rfloor \geq 2\epsilon M = 2\epsilon \vc{\cH^{(s)}}\,,\] 
as long as $\bine(\epsilon) < 1/2$. Observe that $\epsilon = .11$ satisfies this condition and thus we have that
\begin{equation*}
 \vc{\cH^{(s)}} \leq 4.55  \vc{\cH}\,.
\end{equation*}
\end{proof}
\section{Discussion}
In the following, we give a family of examples where the VC-dimension of the similarity hypothesis space is exactly twice that of the original space.   We use the following notation for the set of the first $n$ natural numbers $[n] := \{1,2,\ldots,n\}$.
\begin{example}\label{ex:sparsek}
For the hypothesis space of \emph{k-sparse vectors}, $\cH_{k} := \{h \in \{0,1\}^{[n]} : \sum_{i=1}^n h(i) \leq k\}$,  \[ \vc{\cH_{k}} = k  \text{ and } \vc{\cH_k^{(s)}}=2k\,,\]
provided that $n \geq 2k+1$.
\end{example}

\begin{proof}
Let $X := [n]$. 
Firstly note that $\vc{\cH_{k}}\geq k$, since any subset $T\subseteq X$ with $|T| \leq k$ is shattered by $\cH_{k}$. If $T'\subseteq X$ with $|T'|>k$ then $T'$ cannot possibly be shattered by $\cH_k$ since there is no element in $\cH_{k}$ that labels all elements of $T'$ as 1. Therefore $\vc{\cH_{k}} = k$.


To see that $\vc{\cH_k^{(s)}} \geq 2k$, let $U = \{(x_1,x_2),(x_2,x_3),\ldots ,(x_{2k},x_{2k+1})\}$ for any distinct elements $x_1, x_2, \ldots, x_{2k+1}\in X$ and note that $|U| = 2k$. To show that $U$ is shattered by $\cH_k^{(s)}$, let $g \in \{0,1\}^U$ be any function from $U$ to $\{0,1\}$. We need to find an $h \in \cH_k$ such that $g = \pr{h^{(s)}}{U}$. Two functions in $\{0,1\}^X$ which satisfy the condition $g = \pr{h^{(s)}}{U}$ are $h_0$ and $h_1$ defined by $h_0(x_1) = 0$, $h_1(x_1) = 1$ and
\begin{equation*}
\begin{array}{ll}
h_j(x_{i+1}) & = \left\{ 
	\begin{array}{l l}
		h_j(x_i) & \quad \text{if } g(x_i, x_{i+1}) = 1\\
		1 - h_j(x_i) & \quad \text{if } g(x_i, x_{i+1}) = 0
	\end{array}
\right. \\
h_j(x) & = \begin{array}{l l}
		  \enspace\,\, 0 & \quad\quad\quad\quad\enspace\,\, \forall x \not\in  \{x_1,x_2,\ldots,x_{2k+1}\} 
	\end{array}
\end{array}
\end{equation*}
for $i = 1,\ldots,2k$ and $j = 0,1$. Observe that by construction, $h_0(x_i) + h_1(x_i) = 1 $ for each $i = 1,\ldots,2k+1$ and therefore $\sum_{i = 1}^{2k+1}h_0(x_i) + \sum_{i = 1}^{2k+1}h_1(x_i) = \sum_{i = 1}^{2k+1}[ h_0(x_i) + h_1(x_i)] = 2k+1$. This means that we must have $\sum_{i = 1}^{2k+1} h_j(x_i) \leq k$ for some $j$ and hence $h_j \in \cH_{k}$ with $\pr{h_j^{(s)}}{U} = g$. This proves that $\vc{\cH_{k}^{(s)}} \geq 2k$.

Now suppose, for a contradiction, that $\vc{\cH_{k}^{(s)}} > 2k$. Then there is some set \\$E = \{(u_1,v_1),(u_2,v_2),\ldots,(u_{2k+1},v_{2k+1})\}\subseteq X\times X$ of size $2k+1$ which is shattered by $\cH^{(s)}$. Let $V := \{u_1,u_2,\ldots, u_{2k+1}, v_1, v_2, \ldots, v_{2k+1}\}$ (note that in general we do not necessarily have that $|V| = 4k+2$ since the $u_i$ and $v_i$ need not all be distinct).


Let $G$ be the graph with vertex set $V$ and edge set $E$. Observe that elements of $\cH_k$ correspond to $\{0,1\}$-labellings of $V$ and that elements of $\cH^{(s)}_k$ correspond to $\{0,1\}$-labellings of $E$. Since $E$ is shattered by $\cH^{(s)}_k$, every labelling of $E$ is realisable as the induced map $h^{(s)}$ of some $h\in \cH_k$.

Note that $G$ cannot contain a cycle since there is no labelling of $V$ which could induce a similarity labelling on a cycle in which exactly one edge is labelled $0$ and the rest are labelled $1$\footnote{Indeed, under any such labelling of E any two vertices in the cycle are connected by two paths, one path containing exactly zero edges labelled with a $0$ (implying that the two vertices are labelled the same) and one path containing exactly one edge labelled with a $0$ (implying that the two vertices are labelled differently).}. So the graph is a union of trees, also known as a `forest'. Note that in general the number of vertices in a forest is $|V| = |E| + r$, where $|E|$ is the number of edges and $r$ is the number of trees in the forest. In this case we have $|V| = 2k + 1 + r$.

Now choose a labelling $g$,  which labels the vertices of each connected component (tree) in $G$ according to the following rule: for each connected component $C$ in $G$, label $\lfloor\frac{|C|}{2}\rfloor$ vertices $v\in C$ with a $1$ and the remaining $\lceil\frac{|C|}{2}\rceil$ with a $0$. Note that $g\notin \cH_k$ since

\begin{equation*}
\sum_{v\in V} g(v) = \sum_{C} \sum_{v\in C} g(v) = \sum_C{\left\lfloor\frac{|C|}{2}\right\rfloor} \geq \sum_C{\frac{|C| - 1}{2}} = \frac{|V| - r}{2} = k + \frac{1}{2} > k.
\end{equation*}
Consider the edge labelling $\pr{g^{(s)}}{E}$. Since $E$ is shattered by $\cH_k^{(s)}$, there must be some $h \in \cH_k$ such that $\pr{h^{(s)}}{E}=\pr{g^{(s)}}{E}$. But this is not possible, for if it were, then in order for $h^{(s)}$ to agree with $g^{(s)}$ we would need $\pr{h}{C} = \pr{g}{C}$ or $\pr{h}{C} = 1 - \pr{g}{C}$ for each connected component $C$ in $G$. Swapping the labellings between $0$ and $1$ on one or more of the connected components can only increase the number of $1$ labellings and thus 
\begin{equation*}
\sum_{v\in V} h(v) \geq \sum_{v\in V} g(v) > k
\end{equation*}
so $h$ cannot be in $\cH_k$. Thus we have found a labelling of $E$, namely $\pr{g^{(s)}}{E}$, which cannot be in $\cH_k^{(s)}$. But this is a contradiction of our initial assumption that $E$ was shattered by $\cH_k^{(s)}$. So we have proved that our assumption must have been incorrect and therefore $\vc{\cH_k^{(s)}} = 2k$.
\end{proof}

In Theorem~\ref{thm:vcdim}, the lower bound $\vc{\cH} -1  \le \vc{\cH^{(s)})}$ is tight, for example when $\cH = \{0,1\}^{[n]}$.  
However, observe that in Example~\ref{ex:sparsek}, the hypothesis space of $k$-sparse vectors, the similarity space ``expands'' only by a factor of 2, which is less than the factor $\delta=4.55$ of Theorem~\ref{thm:vcdim}.
We leave as a conjecture that the upper bound in Theorem~\ref{thm:vcdim} can be improved to a factor of two.

\vspace{.1in}
\noindent
{\bf Acknowledgements.}  We would like to thank Shai Ben-David, Ruth Urner and Fabio Vitale for valuable discussions.  In particular we would like thank Ruth Urner for proving an initial motivating upper bound of  $\vc{\cH^{(s)}} \le 2\vc{\cH} \log(2\vc{\cH})$. 

\bibliography{vcsimbib-1}

\bibliographystyle{alpha}

\end{document}